  \providecommand\BibTeX{{%
    \normalfont B\kern-0.5em{\scshape i\kern-0.25em b}\kern-0.8em\TeX}}}
\newcommand{\multiline}[1]{%
	\begin{tabularx}{\dimexpr\linewidth-\ALG@thistlm}[t]{@{}X@{}}
		#1
	\end{tabularx}
}
\newtheorem{definition}{Definition}
\newtheorem{proposition}{Proposition}
\DeclareMathOperator{\wracc}{WRAcc}
\newcommand*\diff{\mathop{}\!\mathrm{d}}
\newcommand{\Np}{N^+}
\begin{document}
\fancyhead{}
\title{REDS: Rule Extraction for Discovering Scenarios}

\author{Vadim Arzamasov}
\affiliation{%
	\institution{Karlsruhe Institute of Technology}
	\streetaddress{P.O. Box 6980}
	\city{Karlsruhe}
	\country{Germany}
	\postcode{76049}
}
\email{vadim.arzamasov@kit.edu}

\author{Klemens B\"ohm}
\affiliation{%
	\institution{Karlsruhe Institute of Technology}
	\streetaddress{P.O. Box 6980}
	\city{Karlsruhe}
	\country{Germany}
	\postcode{76049}
}
\email{klemens.boehm@kit.edu}

\begin{abstract}
  Scenario discovery is the process of finding areas of interest, known as scenarios, in data spaces resulting from simulations. 
  For instance, one might search for conditions, i.e., inputs of the simulation model, where the system is unstable.
  Subgroup discovery methods are commonly used for scenario discovery.
  They find scenarios in the form of hyperboxes, which are easy to comprehend. 
  Given a computational budget, results tend to get worse as the number of inputs of the simulation model and the cost of simulations increase. 
  We propose a new procedure for scenario discovery from few simulations, dubbed REDS. 
  A key ingredient is using an intermediate machine learning model to label data for subsequent use by conventional subgroup discovery methods.
  We provide statistical arguments why this is an improvement.  
  In our experiments, REDS reduces the number of simulations required by 50--75\% on average, depending on the quality measure. 
  It is also useful as a semi-supervised subgroup discovery method and for discovering better scenarios from third-party data, when a simulation model is not available.
\end{abstract}

\begin{CCSXML}
	<ccs2012>
	<concept>
	<concept_id>10010147.10010341.10010370</concept_id>
	<concept_desc>Computing methodologies~Simulation evaluation</concept_desc>
	<concept_significance>500</concept_significance>
	</concept>
	<concept>
	<concept_id>10010147.10010257.10010293.10010314</concept_id>
	<concept_desc>Computing methodologies~Rule learning</concept_desc>
	<concept_significance>500</concept_significance>
	</concept>
	</ccs2012>
\end{CCSXML}

\ccsdesc[500]{Computing methodologies~Simulation evaluation}
\ccsdesc[500]{Computing methodologies~Rule learning}

\keywords{Scenario Discovery; Subgroup Discovery; Rule Extraction; PRIM}

\maketitle

\section{Introduction}
\label{section:intro}

The behavior of many systems, such as electrical grids or climate systems, can be described with differential or difference equations. 
The resulting model connects a set of input values to the output and can be solved with computer experiments, aka.\ simulations.
The inputs of a simulation model can be classified into (1) \emph{control} variables, i.e., those that the user of the model (scientist, engineer, policy maker) can set, and (2) \emph{environmental} variables. 
They reflect uncertainty regarding specific conditions under which the modeled phenomena take place~\cite{Santner2003}. 
For instance, for an electricity producer, the output power and the energy consumption are control and environmental variables respectively.

Analyzing data resulting from simulations has been of interest to the data management community for a long time.
The metaphor ``data farming'' captures the tasks associated with such data~\cite{DBLP:conf/wsc/Sanchez18}.
After performing simulations for different input values, one often replaces the simulation model with a statistical or machine learning (ML) model, a so-called \emph{metamodel}~\cite{Gorissen2010,Uusitalo2015}.
For instance, when the simulation model does not contain environmental variables, or their distribution is known, a metamodel can help to optimize the simulated system \cite{Kleijnen2015,Simpson2001,Wang2007,Gorissen2010}.

However, if the distributions of environmental variables are not known, so-called \emph{deep uncertainty} occurs~\cite{Herman2015,Bryant2010,Walker2013}. 
In this case, the purpose of learning a metamodel is to understand the behavior of a simulated system. 
This process has recently become known as \emph{scenario discovery}. 
A \emph{scenario} is a region of particular interest in the space of environmental inputs~\cite{Kwakkel2016,Kwakkel2016a,Islam2016}. 
Following the usual definition, a region is interesting if the output variable is above some threshold or takes a particular value.

\begin{figure}[t]
	\centering
	\includegraphics[width=2.4in]{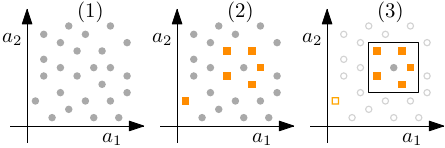}
	\caption{Scenario discovery process.}
	\label{sdprocess}
\end{figure}

One typically performs scenario discovery by
\begin{enumerate}
	\item running simulations for different values of environmental inputs $a_j$ drawn from a uniform distribution;
	\item labeling the outcomes of interest with $y=1$; the rest with 0; 
	\item applying a scenario discovery algorithm to find scenarios. 
\end{enumerate} 
The algorithm used in the last step often is PRIM~\cite{Friedman1999}, described in Section~\ref{section:prim}. 
It finds regions in the form of hyperboxes; see Figure~\ref{sdprocess}.
A hyperbox can be described in the form of a rule, for instance
$$\textrm{IF}\quad a_1^l\le a_1\le a_1^r \quad\textrm{AND}\quad a_2^l\le a_2\le a_2^r \quad\textrm{THEN}\quad y=1,$$
where $y$ is the simulation output and $a_1^l,a_1^r,a_2^l,a_2^r$ are real numbers.

The hyperbox should cover a big share of the interesting region, minimize the coverage of uninteresting space, and not restrict the inputs with little effect on the output. 
The respective metrics are recall, precision, and interpretability~\cite{Bryant2010}.
Since simulations often are computationally expensive~\cite{Wang2007,yilmaz2019reducing}, one wants to obtain a good scenario with few simulation runs. 

\begin{figure}[t]
	\centering
	\includegraphics[width=1.8in]{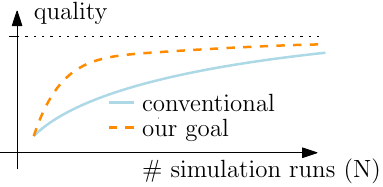}
	\caption{Scenario quality vs. number of simulations.}
	\label{fig:generalization_speed}
\end{figure}

In this paper, we address the problem of reducing the number of simulation runs to obtain high-quality scenarios or, equivalently, increasing the quality of scenarios discovered from a limited number of runs.
Our innovation is combining the conventional scenario-discovery approach with a powerful ML model. 
We will show that, with a conventional scenario-discovery procedure, quality increases slowly with the number of runs (Figure~\ref{fig:generalization_speed}). 
On the other hand, some ML models like random forest can learn a good approximation of a simulated model from a small dataset already.
We use these ML models to label more data for the scenario-discovery method.

Put differently, we propose a new scenario discovery process, REDS (Rule Extraction for Discovering Scenarios), with an intermediate step of estimating an accurate metamodel and using it to obtain a more extensive data set to find scenarios. Since any metamodel is an imperfect approximation of a simulation model, the labels it provides are noisy. 
We provide an analysis leading to the expectation that REDS works better than conventional scenario-discovery approaches from a statistical point of view.
We also discuss the suitability of subgroup-discovery algorithms for scenario discovery and explain that PRIM has certain advantages.
We then list requirements for meaningful evaluations and show that various existing studies do not fulfill them. 
In our experiments, which satisfy these requirements, we compare REDS to existing approaches, 
using various data sources from the metamodeling and scenario-discovery literature. 
REDS improves scenario discovery significantly.
Next, we show that REDS can find subgroups in a semi-supervised setting, and it improves scenario discovery for third-party data where a simulation model is not available.
Our code is openly available\footnote{\url{https://github.com/Arzik1987/REDS_experiments}}.

Paper outline: Section~\ref{section:relatedwork} reviews related work. Section~\ref{section:background} describes
existing algorithms for scenario discovery.
Section~\ref{section:QM} features quality measures.
Section~\ref{section:discussion} examines the suitability of existing subgroup-discovery methods for scenario discovery. 
Section~\ref{section:proposed} introduces our approach and justifies it
from a statistical point of view. 
Section~\ref{section:complexity} features complexity results.
Section~\ref{section:experimentalsetup} covers evaluation principles and our experimental setup. 
Section~\ref{section:results} features results. 
Section~\ref{section:future} describes future research. 
Section~\ref{section:conclusion} concludes.

\section{Related Work}
\label{section:relatedwork}
In this section, we describe scenario discovery techniques used in the literature and their recent improvements. 
Next, we discuss subgroup discovery methods as scenario discovery tools. 
Finally, we review work on weak supervision which has inspired our proposal.

\subsection{Scenario Discovery}
\label{subsection:scenario_discovery}
Scenario discovery targets at interpretable representations of simulation outputs. 
Whether a particular representation is easy to interpret depends on the task at hand~\cite{Freitas2013} and on the background of the user~\cite{Huysmans2011}. 
Empirical studies suggest that ML models outputting hyperboxes are interpretable~\cite{DBLP:journals/jmlr/BaehrensSHKHM10,Freitas2013,Huysmans2011}.
Four big groups of such models are decision trees~\cite{DBLP:books/wa/BreimanFOS84,DBLP:books/mk/Quinlan93}, classification rules~\cite{DBLP:conf/icml/Cohen95,DBLP:conf/aistats/MitaPFM20},  subgroups~\cite{atzmueller2015subgroup,DBLP:journals/kais/HerreraCGJ11} and association rules~\cite{DBLP:conf/sigmod/AgrawalIS93,DBLP:conf/vldb/AgrawalS94,DBLP:conf/sigmod/SrikantA96,DBLP:conf/sigmod/BrinMUT97,DBLP:conf/sigmod/HanPY00}; 
see also~\cite{DBLP:reference/ml/2010,furnkranz2012foundations} and references therein. Association rule learning is an unsupervised task. 
The methods from the other three groups have been used for scenario discovery. 
The articles~\cite{Pierreval1992,Yoshida1989,Berthold1999} use classification rules, \cite{lempert2008comparing, Hadka2015, arzamasov2018towards,Kwakkel2013} use decision trees. 
Currently, the subgroup discovery method PRIM dominates in the scenario-discovery domain, see, e.g.,~\cite{Dalal2013,Kwakkel2016,Kwakkel2016a,Bryant2010,Herman2015,Lempert2006,Groves2007,lempert2008comparing,Hadka2015,Kwakkel2013,ElineGuivarch2016}.
Several improvements of PRIM for scenario discovery have been proposed. 
They include using different target functions guiding the search process~\cite{Kwakkel2016a}, bumping~\cite{hastie2009elements} with random feature selection~\cite{Kwakkel2016}, and combining PRIM with principal component analysis (PCA-PRIM)~\cite{Dalal2013}.
We will compare our method REDS to PRIM with bumping~\cite{Kwakkel2016} and the original PRIM.
PCA-PRIM~\cite{Dalal2013} and different target functions~\cite{Kwakkel2016a} are compatible with REDS and orthogonal to our study.

\subsection{Subgroup Discovery}
\label{subsection:subgroup_discovery}
Early subgroup-discovery algorithms have been derived from existing classification or association rule learning methods or from decision trees~\cite{DBLP:journals/kais/HerreraCGJ11}. 
The task of subgroup discovery is to find hyperboxes separating the subgroups of examples that are large and have a distribution of the output $y$ that significantly differs from the one in the entire dataset. 
This is different from classification rule learning which targets at finding complete and consistent models, i.e., those covering most of the examples with $y=1$ and a very small number of examples with $y=0$~\cite{DBLP:journals/eswa/ValmarskaLFR17}.
In contract to classification rule learning, subgroup-discovery methods often (1) allow to control the number of subgroups~\cite{DBLP:journals/datamine/LeeuwenK12,DBLP:journals/datamine/GrosskreutzR09}, (2) focus on the properties
of individual rules rather than on properties of the rule set as a whole~\cite{DBLP:journals/eswa/ValmarskaLFR17,DBLP:journals/jmlr/LavracKFT04}, and (3) tolerate more false positives~\cite{DBLP:journals/jmlr/LavracKFT04} giving way to higher interpretability of scenarios. 
These qualities are desirable for scenario discovery~\cite{Bryant2010}. 
\cite{DBLP:journals/eswa/ValmarskaLFR17,DBLP:journals/kais/HerreraCGJ11,furnkranz2012foundations} explain relations between classification or association rule learning and subgroups discovery. \cite{lempert2008comparing,hastie2009elements} contrast decision trees with subgroups.

Numerous subgroup-discovery methods exist \cite{atzmueller2015subgroup,DBLP:journals/kais/HerreraCGJ11,DBLP:journals/jcst/Helal16}, 
but many of them do not support real-valued attributes. 
Some methods that accept continuous inputs take hours or days to find subgroups even in small data sets~\cite{DBLP:conf/pakdd/MillotCB20,DBLP:journals/datamine/GrosskreutzR09}.
From the remaining ones, PRIM stands out as it allows a user to balance between interpretability (the number of inputs in the rule description), precision, and recall of each box, and provides visualizations.
This trade-off option is important for scenario discovery~\cite{Bryant2010}, as we show in Section~\ref{section:discussion}. 
It might also be a reason why PRIM is one of the most cited subgroup-discovery algorithms. 
To demonstrate the generality of our proposal to some extent, we show that our method also improves the output of another subgroup-discovery algorithm, \textsc{BestInterval} (BI)~\cite{MampaeyNFK12}.

\subsection{Weakly Supervised Learning}
\label{subsection:weak_learning}
Weakly supervised learning addresses different supervision deficiencies, e.g., incomplete or inaccurate supervision~\cite{DBLP:journals/corr/abs-2012-09632}. 
One can perceive learning from few simulations as incomplete supervision since the labels of many feasible input combinations remain unknown. 
On the other hand, our method REDS is an instance of inaccurate supervision because the pseudo-labels from intermediate metamodel are noisy. 
We now review weak supervision approaches.

\subsubsection{Rule Extraction and Knowledge Distillation}
In a nutshell, to achieve interpretability, 
one can either learn an interpretable model or learn a complex (black-box) ML model and explain it. 
Guidotti et al.~\cite{DBLP:journals/csur/GuidottiMRTGP19} review methods for explaining black-box ML models. 
They distinguish between local and global explanations. The former ones~\cite{DBLP:conf/aaai/Ribeiro0G18,DBLP:journals/corr/abs-1805-10820} answer questions like ``Why does the metamodel predict this label for that input?'', global explanations questions like ``Where is the system stable?''.
Methods for global explanation often are rule-extraction algorithms~\cite{DBLP:journals/csur/GuidottiMRTGP19,Huysmans2006}.
The examples are \textsc{Trepan}~\cite{Craven1996} and CMM~\cite{Domingos1997}, which learn an $m$-of-$n$ decision tree and C4.5 rules from the output of an artificial neural network and an ensemble of C4.5 rules respectively.
Rule extraction can be seen as a subdomain of model parroting~\cite{Settles2010,Bucilua2006} or knowledge distillation~\cite{HintonVD15}, where one trains a better (e.g., faster and/or smaller) student model using another model as a teacher.
This often leads to better results than learning a student model directly from the initial data set~\cite{DBLP:journals/tnn/FortunyM15}.
In the research mentioned, the student model approximates any function with arbitrary accuracy given enough data. 
With REDS, the student model is a single or a small set of hyperboxes. 
It is not a ``universal approximator'', and its quality should be assessed differently from the one of a teacher, see~\cite{DBLP:journals/kais/HerreraCGJ11} for common subgroup-quality measures.

\subsubsection{Data Augmentation}
One can increase the size of a small dataset by augmenting it with new examples through applying label-preserving transformations to the existing ones. 
Examples of such transformations are crop or rotation for image data~\cite{DBLP:journals/jbd/ShortenK19}; replacing words with synonyms for text data~\cite{DBLP:conf/emnlp/WangY15}, or masking a
specific frequency channel for speech data~\cite{DBLP:conf/interspeech/ParkCZCZCL19}. 
For simulated data, label-preserving transformations are not obvious.
More sophisticated data-augmentation approaches use generative adversarial networks (GANs)~\cite{DBLP:journals/jbd/ShortenK19}. 
However, GANs have several problems when learning from small tabular data, e.g., do not converge or run into mode collapse~\cite{DBLP:conf/nips/SrivastavaVRGS17}.
GANs also do not allow to control the distribution of data points that has to be uniform in scenario discovery.

\subsubsection{Self-training and Multi-view Learning}
With self-training, an ML model is repeatedly trained on original data and its own predictions from the previous iteration. 
In multi-view learning, several models trained on small labeled data improve each other by labeling more examples. 
Both methods are instances of semi-supervised learning~\cite{zhu2005semi,DBLP:conf/cvpr/XieLHL20}. 
Unlike self-training, REDS is not iterative, and a student model is different from a teacher. 
In contrast to multi-view methods, the learning in REDS is directed: A subgroup discovery algorithm is always a student.

\section{Subgroup Discovery}
\label{section:background}
This section covers existing subgroup-discovery algorithms.

\subsection{Notation}
\label{subsection:notations}
Let $D$ be a dataset obtained from $\lvert D\rvert=N$ simulations:
$$D=\begin{pmatrix}
x_{11} &  \dots & x_{1M} & y_1 \\
\vdots &  \ddots & \vdots & \vdots \\
x_{N1} & \dots & x_{NM} & y_N \\
\end{pmatrix}.$$
The first $M$ columns contain the values $\left(x_{1j}\dots,x_{Nj}\right)$ of the inputs $a_j$ of a simulation model, and the last column $\left(y_1,\dots,y_N\right)$ is the observed simulation output. 
In the scenario-discovery domain, $y_i\in\left\{0,1\right\}$, i.e., one is interested in binary questions like ``when is the output under a certain value'' or ``for which input values does one policy outperform another one''. 
We define $\Np=\sum y_i$; hereafter the summation is from 1 to $N$ unless otherwise stated.
We refer to the first $M$ elements in each row $x_i=\left(x_{i1},\dots,x_{iM}\right)$ as a \emph{point}; the entire row $d_i=\left(x_i,y_i\right)$ is an \emph{example}. 
Stochastic simulation models implicitly define an unobservable function $f(x):\mathbb{R}^M\rightarrow[0,1]$, $f(x_i)=P(y_i=1|x_i)$; for deterministic models $f(x_i)=y_i$. 
A \emph{hyperbox} $B$ is a conjunction of intervals $B=\prod_{j=1}^{M}[a_j^l,a_j^r]$, $a_j^l\in\mathbb{R}\cup{-\infty}$, $a_j^r\in\mathbb{R}\cup{+\infty}$. 
Given a dataset $D$, $B$ defines a subgroup, a set of examples from $D$ 
within $B$:
$d_i\in B\Leftrightarrow d_i\in D\land$ $x_{ij}\in[a_j^l,a_j^r], \forall j$. 
The size of a subgroup and the number of ``interesting'' examples in it are $n=\sum\mathbb{I}(d_i\in B)$ and $n^+=\sum y_i\cdot\mathbb{I}(d_i\in B)$. Here $\mathbb{I}(z)$ is 1 if $z$ is true, and 0 otherwise. 
A quality measure of a subgroup is a function $\phi(B,D)$. 
In many cases, $\phi(B,D)=\phi(N,\Np,n,n^+,a_j^l,a_j^r)$, $j=1,\dots,M$.
We say that the input $a_j$ \emph{defines} a subgroup $B$ or that $a_j$ is \emph{restricted} if $a_j^l\ne-\infty$ or $a_j^r\ne+\infty$.

\subsection{Subgroup Discovery Algorithms}
\label{section:prim}
This section describes the algorithms PRIM, PRIM with bumping, and BI. These algorithms accept continuous and discrete inputs. 

\subsubsection{PRIM} 

\algsetblock{Procedure}{EndProcedure}{}{0.4cm}
\begin{algorithm}[t]
	\caption{PRIM, peeling step}\label{algorithm:primpeel}
	\begin{algorithmic}[1]
		\Procedure{PRIM.peel}{$D$, $D^{val}$, $\alpha$, $mp$} 	
		\State \multiline{Start with a train dataset $D$, a validation dataset $D^{val}$, and a list $Blist$ containing a single box $\prod_{j=1}^{M}[-\infty,+\infty]$;} 
		\State \label{alg:primpeel:step2} \multiline{%
			for each dimension $a_j$, $j=1,\dots,M$, create two candidate boxes from the last box in $Blist$ by cutting off a share $\alpha$ of points of $D$ inside it with the highest or the lowest values of $a_j$.
			Choose the candidate box $B$ with the highest value of $n^+/n$ on $D$ and append $B$ to $Blist$;}
		\State \label{alg:primpeel:step3} \multiline{repeat Step~\ref{alg:primpeel:step2} as long as the number of points of $D$ or $D^{val}$  in the box is at least ${mp}$;}
		\State \multiline{return the hyperbox with the highest value of $n^+/n$ on $D^{val}$ and all preceding boxes together with quality metrics.}
		\EndProcedure	
	\end{algorithmic}
\end{algorithm}

\begin{figure}[t]
	\centering
	\includegraphics[width=3.1in]{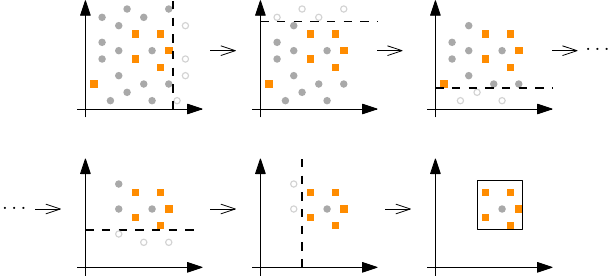}
	\caption{PRIM at work}
	\label{fig:prim_at_work}
\end{figure}

The Patient Rule Induction Method (PRIM) was originally proposed in \cite{Friedman1999}; \cite{hastie2009elements}
is a concise description. 
PRIM works in two phases, called peeling and pasting.
The latter had a negligible effect in our experiments, and many studies \cite{Kwakkel2016,Kwakkel2016a,Dalal2013} do not use it. 
Hence we describe only the peeling phase, cf. Algorithm~\ref{algorithm:primpeel}.
Figure~\ref{fig:prim_at_work} illustrates Step~\ref{alg:primpeel:step3} of the process.
Pasting works similarly but in the opposite direction~\cite{Friedman1999}. 
Each run of PRIM results in a sequence of nested boxes. From this sequence, domain experts choose a single box which best suits their needs. To find more subgroups, one applies the covering approach~\cite{Friedman1999,DBLP:journals/datamine/GrosskreutzR09}, i.e., repeatedly runs PRIM on subsets of the data which do not belong to previously discovered subgroups. 
One can use this covering approach with all algorithms described in this section to obtain a desired number of subgroups.

\subsubsection{PRIM with Bumping}

\begin{algorithm}[t]
	\caption{PRIM with bumping}\label{algorithm:primbag}
	\begin{algorithmic}[1]
		\Procedure{PRIM.bumping}{$D$, $m$, $Q$, $\dots$} 	
		\State $\textit{boxes}=\{\}$
		\For{$1<i<Q$}
		\State $D^\textit{bs}$ \Comment{a bootstrap sample of examples from $D$}
		\State $A=\{a_{j_1}, \dots,a_{j_m}\}$ \Comment{a random subset of $m$ inputs}
		\State $D^\textit{bs}_A$ \Comment{a submatrix of $D^\textit{bs}$ with columns in $A$}
		\State $\textit{boxes}_i=\textsc{PRIM.peel}(D = D^\textit{bs}_A,\dots)$
		\State $\textit{boxes}=\textsc{Append}(\textit{boxes},\textit{boxes}_i)$
		\EndFor
		\State \Return{$\textsc{NotDominated}(\textit{boxes})$}
		\EndProcedure	
	\end{algorithmic}
\end{algorithm}

The PRIM algorithm with bumping~\cite{Kwakkel2016} produces multiple boxes by varying a data set $D$ and returns only the ones not dominated by any other box in terms of precision and recall. We define these measures in Section~\ref{section:QM}.
\begin{definition}
	For a set of quality measures $\phi_1, \dots, \phi_q$ and a dataset $D$, a box $b$ is dominated by a box $B$ if $\forall k=1,\dots, q:$ $\phi_k(b,D)\le\phi_k(B,D)$ and $\exists k^*:\phi_{k^*}(b,D)<\phi_{k^*}(B,D)$.
\end{definition}
The PRIM algorithm with bumping (Algorithm~\ref{algorithm:primbag}) takes a random bootstrap sample $D^{bs}$ from $D$, a random subset of inputs $A\subseteq\{a_1,\dots,a_M\}$, $\lvert A\rvert = m$ (Lines~4--5). Then it 
runs the algorithm \textsc{PRIM.peel} with $D=D^{bs}$ using only columns in $A$ (Lines~6--7).
It repeats the above procedure $Q$ times (Line~3) and stores all boxes returned by \textsc{PRIM.peel} in the set $\textit{boxes}$ (Line~8). 
In the end, it excludes from $\textit{boxes}$ the hyperboxes dominated by any other box from the same set on the validation data $D^{val}$ and returns the result (Line~10).

\subsubsection{BI Algorithm}

\begin{algorithm}[t]
	\caption{The BI algorithm}
	\label{algorithm:BSBI}
	\begin{algorithmic}[1]
		\Procedure{BI}{$D, m, {bs}$}
		\State $k=1$, $Bset_0=\emptyset$
		\State ${Bset_1}=\{\prod_{j=1}^{M}[-\infty,+\infty]\}$
		\While{$Bset_k\ne Bset_{k-1}$}
		\For{$B\in{Bset_k}$}
		\For{$1\le j\le M$}
		\State $B_j=\textsc{BestIntervalWRAcc}(D,B,a_j)$
		\If{$\textsc{Restricted}(B_j)\le m$}
		\State ${Bset_k}={Bset_{k-1}}\cup B_j$
		\EndIf
		\EndFor
		\EndFor
		\State $k=k+1$
		\State ${Bset_{k}}=\textsc{Top}({Bset_k},{bs})$
		\EndWhile
		\State \Return{$\textsc{Top}({Bset_k},1)$}
		\EndProcedure	
	\end{algorithmic}
\end{algorithm}

\begin{figure}[t]
	\centering
	\includegraphics[width=2.4in]{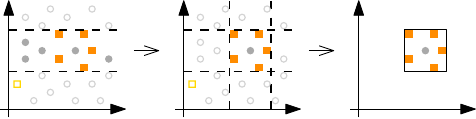}
	\caption{BI at work}
	\label{fig:bi_at_work}
\end{figure}

Algorithm~\ref{algorithm:BSBI} is the BI algorithm. 
It maximizes the WRAcc measure, cf.\  Section~\ref{section:QM}. 
BI takes a data set $D$, a ``depth'' parameter $m$ specifying the maximal number of restricted inputs (Line~8), and $bs$, the ``beam size'', i.e., the maximal number of candidate hyperboxes maintained during the search (Line~14). The procedure $\textsc{Restricted}(B_j)$ returns the number of inputs defining $B_j$; 
the procedure $\textsc{Top}({Bset_k},{bs})$ keeps $bs$ boxes with the highest WRAcc.
BI starts with a set containing a single box $\prod_{j=1}^{M}[-\infty,+\infty]$ (Line~3). 
It then iteratively tries to refine each box from the current set along one dimension (Lines~5--12) by invoking \textsc{BestIntervalWRAcc} (Line~7).
This subroutine updates the boundaries $a_j^l$, $a_j^r$ of $B$ for a specified input $a_j$ to maximize WRAcc, see~\cite{MampaeyNFK12} for more details.
BI stops when no further refinement is possible and returns a box with the highest WRAcc on $D$ (Line~16). 
Figure~\ref{fig:bi_at_work} shows how BI finds the subgroup by consecutively running \textsc{BestIntervalWRAcc} for two dimensions.  
As with PRIM, with BI one can find a required number of subgroups following the covering approach.

\section{Quality Metrics}
\label{section:QM}
In this section, we describe the quality metrics we use. Some of them, precision, WRAcc, and interpretability, are commonly used in the subgroup discovery domain. We propose three additional metrics: number of irrelevantly 
restricted inputs, consistency, and PR AUC.
The first one complements the interpretability measure.
Consistency quantifies the robustness of a scenario with respect to changes in data used to discover it. 
PR AUC allows assessing several nested boxes returned by PRIM with a single value.

The metrics evaluate the output from a single run of a subgroup discovery algorithm, i.e., one box for BI or a sequence of nested hyperboxes for PRIM.
To assess the quality of a set of subgroups after the covering approach (see, Section~\ref{section:prim}), one usually averages the qualities of the individual boxes the set consists of~\cite{DBLP:journals/datamine/GrosskreutzR09,DBLP:journals/jmlr/LavracKFT04}. 

\textbf{Precision, PR AUC}.
\label{subsection:CDI}
A scenario should contain many interesting examples ($y=1$) and few uninteresting ones ($y=0$). 
This is equivalent to high recall and precision~\cite{Bryant2010}:
$$\textit{recall}=\frac{n^+}{\Np}\qquad \textit{precision} = \frac{n^+}{n}.$$

\begin{figure}\centering
	\includegraphics[width=2.8in]{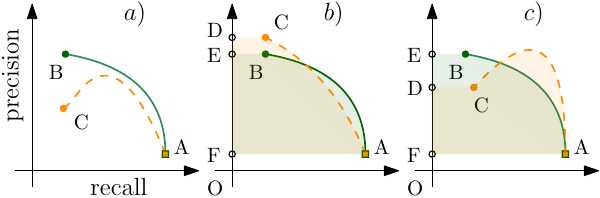}
	\caption{Mutual positions of two peeling trajectories}
	\label{fig:curves}
\end{figure}

PRIM outputs a sequence of boxes from which a user can choose one, usually by compromising between precision and recall~\cite{Bryant2010}. 
To exclude this subjective choice from our evaluation, we compute precision and recall for each box. 
The resulting pairs of values form a curve in the precision-recall coordinates,
the \emph{peeling trajectory}~\cite{Bryant2010,Friedman1999}.
To rank two algorithms, we compare their curves: $\textit{AB}$ and $\textit{AC}$ on Figure~\ref{fig:curves} with the area under the curve metric: 
We compare the areas covered by figures $\textit{ABEF}$ and $\textit{ACDF}$, as shown in Plots~b) and c). 
A larger area corresponds to a better algorithm. While this PR AUC measure is popular in machine learning~\cite{DBLP:conf/icml/DavisG06,saito2015precision}, to our knowledge, it has not been used for PRIM output so far.

Sometimes one wants to find scenarios as pure as possible, i.e., allow lower recall for high precision. 
Since test data is not available in reality, one can make this choice using validation data $D^{val}$. 
This corresponds to choosing the last box returned by PRIM.

\textbf{WRAcc}. The BI algorithm outputs a single box that optimizes the Weighted Relative Accuracy (WRAcc):
$$\wracc=\frac{n}{N}\left(\frac{n^+}{n}-\frac{\Np}{N}\right)$$ 
It tends to favor large boxes, with high $n/N$, which also have a high mean value of output $y$, i.e, high $n^+/n$.
So we will use WRAcc to assess the output of BI.

\textbf{Interpretability}. 
Research on interpretability and on ways to quantify it is ongoing, see~\cite{rudin2019stop,DBLP:journals/ai/Miller19,DBLP:journals/cacm/Lipton18}.
In scenario discovery, one defines interpretability as the number of inputs restricted by the hyperbox
defining the scenario~\cite{Bryant2010,Kwakkel2016}: 
$$\# \textit{restricted} = \sum_{j=1}^M\mathbb{I}\left(a_j^l\ne -\infty\lor a_j^r\ne +\infty\right).$$
Low values of $\# \textit{restricted}$ mean high interpretability.
For PRIM, we will compute this and the following measures using the last box. 

\textbf{Number of Irrelevantly Restricted Inputs}.
It is not only important to have few restricted inputs. One also does not want any input of a simulation model without any influence on the output to define a subgroup. 
For our experiments, we propose a respective measure that counts the number of irrelevantly restricted inputs:
$$\# \textit{irrel} = \sum_{j=1}^M\mathbb{I}\left(\left[a_j^l\ne -\infty\lor a_j^r\ne +\infty\right]\land P\left(y=1\lvert a_j\right)=P\left(y=1\right)\right).$$
We use it as a proxy of subgroup compliance with the ``prior knowledge'' of an expert, another dimension of  interpretability~\cite{pazzani1997comprehensible}.

\textbf{Consistency}.
Different data sets produced by a simulation model can result in different scenarios discovered.
This reduces interpretability \cite{Friedman1999} since one looks for some hidden structure in the model, rather than in the particular data produced by it. So we introduce another quality measure for scenarios, \emph{consistency}. 
\begin{definition}
	For two data sets $D_1$ and $D_2$, with $\lvert D_1\rvert=\lvert D_2\rvert$, produced by the same model with continuous inputs, let $B_1$ and $B_2$ be the scenario descriptions obtained with the same scenario-discovery algorithm $\textit{SD}$. 
	Let $V_o$ be the volume of the overlap of these boxes and $V_u$ the one of the union of $B_1$ and $B_2$. 
	The consistency of $\textit{SD}$ is 
	$$\textit{consistency}=\mathbb{E}\left[{V_o}/{V_u}\right]$$      
\end{definition}
When the definition of $B_1$ or $B_2$ includes unbounded intervals ($a_j^l=-\infty$ or $a_j^r=+\infty$), we replace infinities with the minimal and maximal values of the respective input. 
For discrete inputs, one can use counts of distinct values instead of volumes. 

Consistency is often used in the rule learning literature, but with other definitions.
For instance, \cite{Huysmans2006} defines it with respect to the inherent randomness of the algorithm.

\section{PRIM vs. BI} 
\label{section:discussion}
The ``interactivity'' of PRIM is desirable for scenario discovery~\cite{Bryant2010}. We now explain why it is an advantage with new arguments.

After a single run, BI returns a hyperbox, balancing precision and recall with the WRAcc measure. 
Most other subgroup-discovery methods do the same, sometimes using different measures.
PRIM in turn lets a user choose one out of a set of nested hyperboxes, displaying their precision and recall values.
Now we say why this interactive behavior is desirable. 
Consider the following example.

\begin{example}
	The output $y$ of a simulation model is a function of its single input $a$ and is defined on some interval $[0,h]$, $h\in\mathbb{R}$:
	
	\begin{align}
		f(a)= P(y=1|a) = \left\{ \begin{array}{lc} 
			1 & \hspace{5mm} a\in[0,1) \\
			a-1 & \hspace{5mm} a\in[1,2] \\
			0 & \hspace{5mm} a\in(2,h] \\
		\end{array} \right. \nonumber
	\end{align}
	Depending on user needs, two intervals might be of interest. 
	The interval $[0,1]$ has the highest precision 1, as $y=1$. 
	Another interval, $[0,2]$, contains all points with $f(a)>0$. Suppose that one runs many simulations, i.e., $N\rightarrow\infty$. 
	PRIM will output a sequence of intervals, containing the two just mentioned.
	BI will return a single box that maximizes WRAcc. 
	For the above function, 
	$$\wracc([0,1])=\frac{1}{h}-\frac{1.5}{h^2};\quad \wracc([0,2])=\frac{1.5}{h}-\frac{3}{h^2}.$$
	$\wracc([0,1])>\wracc([0,2])\Leftrightarrow h<3$, i.e., the result depends on interval boundary $h$. 
\end{example}

The input ranges, $h$ in the example, are often arbitrary to some extent in scenario discovery. 
One wants to avoid a dependency of a scenario on these ranges. 
BI and other methods do not facilitate this. 
In contrast, a user can easily find both intervals from the example regardless of the value of $h$.
They manifest themselves as sudden changes in the slope of a peeling trajectory.

\section{Proposed Method: REDS} 
\label{section:proposed}

\begin{algorithm}[t]
	\caption{Our method: REDS}\label{primrf}
	\begin{algorithmic}[1]
		\Procedure{REDS}{$D,AM,L,\textsc{SD},\dots$}
		\State $f^{am}(x) = \textsc{Train}(AM,D)$
		\State $D^\textit{new}=\textsc{NewPoints}(L)$
		\For{$1 < i < L$}
		\State $y^\textit{new}_i=\mathbb{I}\left(f^{am}(x^\textit{new}_i)>\textit{bnd}\right)$
		\EndFor
		\State \textbf{return} $\textsc{SD}(D^\textit{new},\dots)$
		\EndProcedure
	\end{algorithmic}
\end{algorithm}

This section features our new method, REDS. From a statistical point of view, we also show why conventional subgroup discovery methods have difficulties in learning functional scenarios from small data sets, and how REDS can overcome them.

\subsection{REDS Algorithm}
\label{subsection:reds}
The novelty of REDS is introducing an intermediate step to the conventional scenario-discovery approach described in Section~\ref{section:intro}. 
In this step, we train a metamodel with low variance and generalization error.
The proposed process (Algorithm~\ref{primrf}) is as follows. 
\begin{enumerate}
	\item Use $D$ to train an accurate metamodel $AM$ (Line~2); 	
	\item sample $L$ points i.i.d. from the same distribution $p(x)$ as points in $D$ to create $D^\textit{new}$ (Line~3);
	\item label $D^\textit{new}$ with the trained metamodel $f^{am}$ (Lines~4--6); 
	\item apply a scenario discovery algorithm $\textsc{SD}$ on $D^\textit{new}$ (Line~7). 
\end{enumerate} 
$\textsc{SD}$ can be any suitable algorithm, for instance, PRIM, PRIM with bumping, or BI. REDS accepts the same data types as $\textsc{SD}$.
The value of $\textit{bnd}$ in Line~5 depends on the metamodel $AM$; it is used to ensure that $y^\textit{new}\in\{0,1\}$.
In this paper, we experiment with random forest~\cite{DBLP:conf/icdar/Ho95,DBLP:journals/ml/Breiman01}, XGBoost~\cite{DBLP:conf/kdd/ChenG16}, and support vector machine (SVM)~\cite{DBLP:journals/ml/CortesV95} as $AM$ since they perform well in various tasks~\cite{wainberg2016random,DBLP:conf/gecco/OrzechowskiCM18}.
If $AM$ is random forest or XGBoost, then $f^{am}(x)$ (Line~2) approximates the unobserved function $f(x)=P(y=1|x)$, and one can modify Algorithm~\ref{primrf} by replacing Line~5 with $y^\textit{new}_i=f^{am}(x^\textit{new}_i)$. In some cases this modification achieves the best results, as we will show.

Under deep uncertainty, one assumes a uniform distribution of model inputs, $p(x)=\textrm{const}$. 
For our following derivations, we only use that $p(x)$ is known. 
This relaxation also allows one to see REDS as a semi-supervised subgroup discovery algorithm, i.e., able to learn from labeled and unlabeled data~\cite{zhu2005semi}. 
In particular, one can use the whole dataset instead of $D^\textit{new}$ (Line~3); we only require that points in labeled and unlabeled parts come from the same $p(x)$.

\subsection{Statistical Intuition behind REDS}
\label{subsection:statistical_intuition}
In the rest of this section, we assume $\textsc{SD}$ to be PRIM.
In each iteration $k$, PRIM shrinks the box $B_{k}$ to obtain the box $B_{k+1}$. 
It aims at choosing $B_{k+1}$ from candidate boxes $B_\textit{jk}$ with $j=1,\dots,2M$ so that the mean value of $f(x)$ in $B_{k+1}$ is maximal.
Equivalently, the mean value of $f(x)$ in the box $b_\textit{jk}=B_{k}\setminus B_\textit{jk}$ which is ``peeled off'' is minimal. 
In reality, $f(x)$ is unknown, and its mean $\mu_\textit{jk}$ in $b_\textit{jk}$ is estimated from the sample of points in $b_\textit{jk}$. 
A high error of this estimate may result in cutting off the wrong box, the one which does not maximize the mean value of $f(x)$ in $B_{k+1}$. 
So REDS will likely make fewer wrong cuts if its error in estimating $\mu_\textit{jk}$ is smaller. 

Let $b\in \{b_\textit{jk}\}$. The mean value of $f(x)$ in $b$ is
\begin{equation}
\label{eq:1}
\mu=\frac{\int_b f(x)p(x)\diff x}{\int_b p(x)\diff x}.
\end{equation} 
Here $p(x)$ is the pdf of the $M$-dimensional random variable $X$ denoting the point in the input space, as before. 
In the following analysis, we assume $b$ to be fixed. It contains $n'=\alpha \cdot (1-\alpha)^{k} \cdot N$ points
labeled with $y_i$, $i=1,\dots,n'$, by means of simulations. 
Here $N$ is the number of data points in $D$, as before.
The estimate of mean $\mu$ from the data is 
\begin{equation}
\hat{\mu} = \frac{1}{n'}\sum y_i\cdot\mathbb{I}(d_i\in b)
\label{eq:conventional}
\end{equation} 
The mean squared error (MSE) of this quantity is~\cite{Friedman1999} 
\begin{equation}
\begin{split}
\textit{MSE}_O=\mathbb{E}[(\mu-\hat{\mu})^2]=(\mu-\mathbb{E}[\hat{\mu}])^2+\mathbb{E}[(\hat{\mu}-\mathbb{E}[\hat{\mu}])^2]\\
=\textrm{Bias}^2(\hat{\mu})+\textrm{Var}(\hat{\mu})
\end{split}
\label{eq:bvdec}
\end{equation}
Here the expectation is taken over all datasets $D$ with $\lvert D\rvert=N$ containing points i.i.d. from $p(x)$. 
In this case, $\hat{\mu}$ is an unbiased estimate of $\mu$.
Remember that $y_i$ takes values from $\{0,1\}$. 
Thus, $y$ is a Bernoulli random variable with $P\left(y=1\vert x\in b\right)=\mu$. 
Formally, 
\begin{equation}
\textrm{Bias}(\hat{\mu})=0, \quad 
\textrm{Var}(\hat{\mu})=\frac{\mu(1-\mu)}{n'}=\textit{MSE}_O
\label{eq:bv_prim}
\end{equation}

With REDS, a function $f^{am}(x)$ learned with metamodel $AM$ is used to label points. 
Let
\begin{equation}
\mu^{am}=\frac{\int_b y^\textit{new}(x)p(x)\diff x}{\int_b p(x)\diff x}
,\quad
\hat{\mu}^{am}=\frac{1}{l}\sum_{i=1}^L y_i^\textit{new}\cdot\mathbb{I}(d_i^\textit{new}\in b)
\label{eq:mean_CM}
\end{equation}
where $l=\sum_{i=1}^L\mathbb{I}(d_i^\textit{new}\in b)\approx n'\cdot L/N$ is the number of newly generated points inside $b$, and $y^\textit{new}(x)=\mathbb{I}(f^{am}(x)>\textit{bnd})$. 
In general, $f(x)\ne y^\textit{new}(x)$ and $\mu^{am}\ne\mu$. 
Assume first a \emph{fixed} function $f^{am}(x)$. 
It generally also implies a fixed $D$.
Analogously to~(\ref{eq:bvdec})--(\ref{eq:bv_prim}), the bias-variance decomposition of the mean squared error is 
\begin{equation}
\begin{split}
\textrm{Bias}^2\left(\hat{\mu}^{am}\vert f^{am}(x)\right)=\left(\mu-\mu^{am}\right)^2, \\
\textrm{Var}\left(\hat{\mu}^{am}\vert f^{am}(x)\right)=\frac{\mu^{am}\left(1-\mu^{am}\right)}{l}\xrightarrow{l\to\infty}0
\label{eq:bv_prim_ml}
\end{split}
\end{equation}
where the expectation was taken over all datasets $D^\textit{new}$  (Algorithm~\ref{primrf}, Line~3) that are possible with our approach. 
Here we used that
points in $D^\textit{new}$ come from $p(x)$, hence $P\left(y^\textit{new}=1\vert x^\textit{new}\in b\right)=\mu^{am}$.
When $D$ and $f^{am}(x)$ vary, MSE with REDS for large $L$ is 
\begin{equation}
\textit{MSE}_\textit{R}=\mathbb{E}[(\mu - \mu^{am})^2],
\label{eq:mse_am}
\end{equation}
where the expectation is taken over all feasible datasets $D$ as in~(\ref{eq:bvdec}), and all possible fits $f^{am}(x)$ of a given metamodel $AM$ obtained using these datasets.

Now we can compare the $\textit{MSE}_O$ obtained with PRIM~(\ref{eq:bv_prim}) 
with $\textit{MSE}_\textit{R}$ with REDS~(\ref{eq:mse_am}). 
Lower MSE values mean a higher probability of finding a box $B_{k+1}$ with optimal precision \emph{and} recall values and lead to better values of precision and the PR AUC metrics.
Assuming that the best scenario is the one discovered with PRIM knowing the true function $f(x)$, REDS will perform superior to plain PRIM if, for all possible boxes~$b$, $\mathbb{E}[(\mu - \mu^{am})^2]<\mu(1-\mu)/n'$. Similarly, our method is \emph{likely} to show better performance than the original PRIM if the above inequality holds for the \emph{majority} of boxes.
The left-hand side of the inequality implicitly depends on $N$, since increasing the size of a training set typically leads to a better approximation of $f(x)$ by $f^{am}(x)$ and a lower $\mathbb{E}[(\mu - \mu^{am})^2]$.

Now consider the modification of REDS from Section~\ref{subsection:reds} where  $y^\textit{new}_i=f^{am}(x^\textit{new}_i)$, i.e., $y_i^\textit{new}\in[0,1]$ and $y^\textit{new}=f^{am}(x)$ in~(\ref{eq:mean_CM}). Interestingly, in this case, our approach may outperform the original one even when the size $L$ of the new dataset $D^\textit{new}$ is comparable to the size $N$ of the initial dataset~$D$. Specifically, the following holds.

\begin{proposition} 
	If $n'=l\land\mu=\mu^{am}$, then $\textrm{Var}(\hat{\mu}^{am})\le\textrm{Var}(\hat{\mu})$
\end{proposition}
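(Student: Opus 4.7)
My plan is to exploit the fact that, among all random variables supported on $[0,1]$ with a prescribed mean, the Bernoulli distribution has the largest variance. This is the heart of the matter: $\hat{\mu}$ averages Bernoulli labels, whereas $\hat{\mu}^a$ averages the probability-valued outputs $y_r^a=f^a(x^r)\in[0,1]$, which are ``less spread out'' at the same mean.

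First I would fix the metamodel fit $f^a$ (so that the randomness comes only from the i.i.d.\ uniform sampling of the $K$ new points in step~2 of Algorithm~\ref{primrf}), and observe that the $k$ values $y_r^a$ falling inside $b$ are i.i.d.\ with mean $\mathbb{E}[y_r^a]=\mu^a$ by definition of $\mu^a$ as the mean of $f^a$ over $b$. Independence then gives
\begin{equation}
\textrm{Var}(\hat{\mu}^a)=\frac{\textrm{Var}(y_r^a)}{k}.
\end{equation}

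Next I would bound $\textrm{Var}(y_r^a)$ using the key inequality $y^2\le y$ for every $y\in[0,1]$: taking expectations yields $\mathbb{E}[(y_r^a)^2]\le \mathbb{E}[y_r^a]=\mu^a$, hence
\begin{equation}
\textrm{Var}(y_r^a)=\mathbb{E}[(y_r^a)^2]-(\mu^a)^2\le \mu^a-(\mu^a)^2=\mu^a(1-\mu^a),
\end{equation}
with equality precisely when $y_r^a\in\{0,1\}$ almost surely, i.e., when $f^a$ returns only hard labels.

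Finally, combining the two displays gives $\textrm{Var}(\hat{\mu}^a)\le \mu^a(1-\mu^a)/k$. Under the hypotheses $n=k$ and $\mu=\mu^a$, the right-hand side equals $\mu(1-\mu)/n=\textrm{Var}(\hat{\mu})$ from equation~(\ref{eq:bv_prim}), which finishes the argument. I do not anticipate a real obstacle here; the only subtlety worth flagging is that the randomness in $\textrm{Var}(\hat{\mu}^a)$ is taken over $d_\textit{new}$ with $f^a$ held fixed (consistent with the setup preceding the proposition), which is what makes the i.i.d.\ reduction legitimate.
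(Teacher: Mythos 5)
Your proof is correct and follows essentially the same route as the paper's: reduce to comparing the per-sample variances via $n=k$, then bound $\textrm{Var}(y_r^a)$ by observing that $\mathbb{E}[(y_r^a)^2]\le\mathbb{E}[y_r^a]$ because $y_r^a\in[0,1]$ (the paper writes this same inequality as an integral against the pdf of $y_r^a$). Your added remark on when equality holds is a nice touch but does not change the argument.
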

\begin{proof} 
	Since $n'=l$, it is sufficient to show that within $b$
	\begin{equation}
	\mathbb{E}[(y^\textit{new})^2]-(\mathbb{E}[y^\textit{new}])^2=\textrm{Var}(y^\textit{new})\le \textrm{Var}(y)=\mu(1-\mu)
	\label{eq:proof1}
	\end{equation}
	Since $\mathbb{E}[y^\textit{new}]=\mu^{am}=\mu$, $(\ref{eq:proof1})\Leftrightarrow \mathbb{E}[(y^\textit{new})^2]\le\mu$.
	This is true as
	\begin{equation}
	\begin{split}
	\mathbb{E}[(y^\textit{new})^2]=\int_0^1 (y^\textit{new})^2g(y^\textit{new})\diff y^\textit{new} \\
	\le\int_0^1 y^\textit{new}g(y^\textit{new})\diff y^\textit{new}=\mu^{am}=\mu.
	\end{split}
	\end{equation}
	Here $g(y^\textit{new})$ is the pdf of $y^\textit{new}$ implied by 
	the restriction of $f^{am}(x)$ to the box $b$. The latter inequality holds since $y^\textit{new}\in [0,1]$ and $g(y^\textit{new})\ge0$.
\end{proof}
However, the condition $\mu=\mu^{am}$ of the proposition does not hold for all possible boxes $b$, unless $f(x)\equiv f^{am}(x)$. We experiment with this REDS modification in the case $L=N$ in Section~\ref{section:K-values}.

\subsection{Discussion of the Statistical Derivation}
To avoid restrictive assumptions on the true function $f(x)$, we made certain simplifications.
First, we assume that the box $b$ and the number of points it contains, $n'$, are both fixed, while the points in $D$ are sampled at random. 
In reality, only $n'$ is fixed at each iteration, and the box boundary varies to include exactly $n'$ points.
This variation is low once $p(x)$ is close to uniform.
Allowing the box boundary to vary with different realizations of $D$ would make MSE estimates (\ref{eq:bv_prim}) and (\ref{eq:bv_prim_ml}) incomparable. 
Second, one usually uses so-called space-filling designs~\cite{Santner2003} to form a dataset $D$ rather than ``brute force'' random sampling., e.g., Latin hypercube sampling~\cite{Kleijnen2015}. 
Generally, this would result in lower variance values than estimated with (\ref{eq:bv_prim}) or (\ref{eq:bv_prim_ml}). 
With these simplifications, our analysis explains the experimental results sufficiently well.

\section{Complexity Analysis} 
\label{section:complexity} 
We now derive the time complexities of the algorithms introduced above. 
We do so with respect to characteristics $N$ and $M$ of a dataset $D$, the hyperparameters $\alpha$, $bs$, $m$, $Q$ of the subgroup-discovery algorithms and the hyperparameter $L$ of REDS. 

\textbf{PRIM and PRIM with Bumping}.
PRIM requires quantile computation for each input. A straightforward approach is to sort the values of each input once, although other variants are possible~\cite{DBLP:journals/cacm/Hillmore62b}, in $O(M\cdot N\log N)$. 
After sorting, PRIM requires $O(M\cdot N(1-\alpha)^k)$, $k=0,1,\dots$ time for each iteration. 
In the worst case, PRIM stops when it runs out of data, and its overall complexity is $O(M\cdot N(\log N+1/\alpha))$. PRIM with bumping executes PRIM $Q$ times using a subset of $m$ attributes; its complexity is $O(N(M\log N+(Q\cdot m)/\alpha))$. 

\textbf{BI}.
The \textsc{BestIntervalWRAcc} subroutine requires sorted input values. 
Sorting is done once for each input, requiring $O(M\cdot N\log N)$ time. 
With sorted inputs, the complexity of \textsc{BestIntervalWRAcc} is $O(N)$~\cite{MampaeyNFK12}. 
The number of ``while'' cycles (see Algorithm~\ref{algorithm:BSBI}) is dataset-dependent; usually it does not differ much from the search depth $m$. 
The number of external ``for'' cycles is restricted by $bs$. 
All this leads to $O(M\cdot N(\log N+m\cdot bs))$ time for $\textsc{BI}$. 

\textbf{Metamodels}.
For the metamodels we consider, the time complexities are as follows. 
Random forest is a fixed number of decision trees. 
A binary decision tree algorithm, e.g., CART~\cite{DBLP:books/wa/BreimanFOS84}, (1) sorts all attributes once and (2) at each depth level $k$ iterates through $N$ values of each input to find the best split. 
A depth of a balanced decision tree is $\log_2 N$~\cite{hastie2009elements}. 
Hence, the training time of a random forest is in $O(\psi(M)\cdot N\log N)$. Here $\psi(M)$ is the hyperparameter of random forest~--- the number of inputs to consider when looking for the best split, $\psi(M)\le M$. 
Training XGBoost takes $O(M\cdot N\log N)$~\cite{DBLP:conf/kdd/ChenG16}.  
The complexity of non-linear SVM is between $O(M\cdot N^2)$ and $O(M\cdot N^3)$~\cite{bottou2007support,DBLP:journals/neco/Chapelle07}. 

\textbf{REDS}.
The complexity of REDS is the sum of the ones of a subgroup-discovery algorithm $SD$ and a metamodel $AM$. 
For instance, if $AM$ is XGBoost and $\textsc{SD}$ is PRIM, REDS discovers a scenario in $O(M(N\log N + L\log L+L/\alpha))$ time, where $L$ is the number of examples in $D^\textit{new}$, as before.

\section{Experimental Methodology}
\label{section:experimentalsetup} 
This section describes requirements for a meaningful evaluation of scenario-discovery algorithms, followed by our experimental setup. 
We start with the principles. 
After introducing naming conventions, we then describe our choices regarding hyperparameters of algorithms, data sets, and experiment design.

\subsection{Evaluation Principles}
\label{subsection:survey}
To come up with an  evaluation procedure that is conclusive, we have surveyed the literature on scenario and subgroup discovery. We have found that some existing evaluations do not comply with three principles of a conclusive evaluation, 
namely (1) using many datasets, (2) optimizing the hyperparameters of methods, and (3) using independent data for testing. 
Diversity of datasets is essential to ensure the generality of results. We show the importance of the latter two principles in the following example.

\begin{figure}[t]
	\centering
	\includegraphics[width=2.1in]{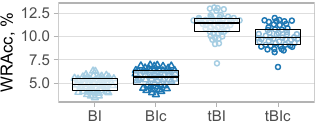}
	\caption{Demonstration. Evaluation of BI.}
	\label{fig:demo}
\end{figure}

\begin{example}
	We generated data sets $D_i$, $i=1,\dots,50$, $N=400$, and a test data set $D^\textit{test}$, $\lvert D^\textit{test}\rvert=20000$ using the same function ``morris''~\cite{saltelli2000sensitivity}.
	To each data set $D_i$, we have applied the BI algorithm to find a subgroup. 
	We tried both $m=M$ and choosing the value of $m$ via 5-fold cross-validation.
	The results are in Figure~\ref{fig:demo}, where the notation is as follows. 
	If the quality is measured on the train data $D_i$ and not on $D^\textit{test}$, we add the letter ``t''.
	Next, if $m$ is optimized via cross-validation, we write ``c''. 
	For instance, ``tBIc''means that we evaluated the output of the BI algorithm on the train data and used a procedure for hyperparameter optimization.
	For each item, ``BI'', ``BIc'', ``tBI''and ``tBIc'', there are 50 points, equal to the number of data sets $D_i$ we generated.
	
	The over-plotted boxes are the quartiles of WRAcc obtained in these 50 experiments. 
	First, one can see that hyperparameter optimization improves the results: WRAcc for ``BIc''is generally higher than for ``BI''. 
	Next, not using the independent data set $D^\textit{test}$ leads to (a) overly optimistic quality assessment~--- ``tBI'', ``tBIc''have higher WRAcc values than ``BI'', ``BIc'', and (b) misleading rankings of methods~--- WRAcc of ``tBI''is higher than ``tBIc'', but on the test data, the ranking is the other way around. 
\end{example}

Although these three principles represent good practice, various subgroup-discovery studies do not follow them. For instance, \cite{DBLP:journals/jair/GambergerL02,MampaeyNFK12,DBLP:journals/tfs/JesusGHM07,DBLP:journals/soco/CarmonaGJNJ11,Kwakkel2016,Kwakkel2016a,DBLP:journals/kbs/CarmonaJH18,Friedman1999,DBLP:journals/aai/KavsekL06,DBLP:journals/pvldb/GebalyAGKS14} use fewer than four data sets. 
The authors of~\cite{DBLP:journals/datamine/GrosskreutzR09,MampaeyNFK12,vollmer2019informative,DBLP:conf/pakdd/MillotCB20,DBLP:journals/jcst/Helal16,Dalal2013,Kwakkel2016a,DBLP:journals/eswa/RomeroGVJH09,DBLP:journals/kbs/CarmonaJH18,DBLP:journals/pvldb/GebalyAGKS14} do not evaluate methods on independent test data. 
Various papers just mentioned and~\cite{DBLP:journals/jmlr/LavracKFT04,DBLP:conf/eurogp/LunaRRV13,DBLP:journals/isci/RodriguezRRA12} with few exceptions (\cite{DBLP:journals/tfs/JesusGHM07} and~\cite{DBLP:journals/soco/CarmonaGJNJ11}) use the pre-fixed values of hyperparameters in their comparative experiments, usually proposed by the inventors of the respective approach.

\subsection{Naming}
To refer to methods evaluated, we adopt the following conventions. ``P''stands for the peeling step of PRIM,
''PB''means PRIM with bumping. ``BI''and ``BI5''stand for the BI algorithm with $bs=1$ and $bs=5$. 
If hyperparameters of a scenario-discovery algorithm are optimized, we add the letter ``c'', e.g., ``BIc''.
REDS starts with ``R''; ``f'', ``x'', and ``s''refer to different metamodels, namely random \textbf{f}orest, \textbf{X}GBoost, and \textbf{S}VM with RBF kernel~\cite{vert2004primer}. If we use the modification of REDS discussed in Section~\ref{subsection:reds} where  $y^\textit{new}_i=f^{am}(x^\textit{new}_i)$, we add ``p''(\textbf{p}robabilities). For instance, ``RPxp''denotes modified REDS with PRIM as an $\textsc{SD}$ algorithm and XGBoost as $AM$.
 
\subsection{Data Sets}
\label{subsection:data} 

\begin{table}[]
	\caption{Functions for experimental study}
	\label{tab:app_dgps}
	\centering
	\begin{tabular}{lccccc}
		\toprule
		\textbf{function}     & \textbf{M}  & \textbf{I} & \textbf{reference} & \textbf{thr} & \textbf{share (\%)} \\ \midrule
		1                & 5  & 2  & \cite{Dalal2013}    &   na  & 47.6  \\
		2                & 5  & 2  & \cite{Dalal2013}    &   na  & 25.7  \\
		3                & 5  & 2  & \cite{Dalal2013}    &   na  & 8.2  \\ 
		4                & 5  & 2  & \cite{Dalal2013}    &   na  & 18  \\ 
		5                & 5  & 2  & \cite{Dalal2013}    &   na  & 8  \\ 
		6                & 5  & 2  & \cite{Dalal2013}    &   na  & 8.1  \\ 
		7                & 5  & 2  & \cite{Dalal2013}    &   na  & 35  \\ 
		8                & 5  & 2  & \cite{Dalal2013}    &   na  & 10.9  \\ 
		102              & 15 & 9  & \cite{Dalal2013}    &   na  & 67.2  \\ 
		borehole         & 8  & 8  & \cite{simulationlib}    & 1000    & 30.9  \\ 
		dsgc             & 12 & 12 & \cite{Schafer2015}    &   na  & 53.7  \\
		ellipse          & 15 & 10 & our    &   0.8  & 22.5  \\ 
		hart3            & 3  & 3  & \cite{simulationlib}    & $-1$    & 33.5  \\ 
		hart4            & 4  & 4  & \cite{simulationlib}    & $-0.5$    & 30.1  \\ 
		hart6sc          & 6  & 6  & \cite{simulationlib}    & 1    & 22.6  \\ 
		ishigami         & 3  & 3  & \cite{simulationlib}    & 1    & 25.5  \\ 
		linketal06dec    & 10 & 8  & \cite{simulationlib}    & 0.15    & 25.3  \\ 
		linketal06simple & 10 & 4  & \cite{simulationlib}    & 0.33    & 28.5  \\ 
		linketal06sin    & 10 & 2  & \cite{simulationlib}    & 0    & 27.2  \\ 
		loepetal13       & 10 & 7  & \cite{simulationlib}    & 9    & 38.9  \\
		moon10hd         & 20 & 20 & \cite{simulationlib}    & 0    & 42.1  \\ 
		moon10hdc1       & 20 & 5  & \cite{simulationlib}    & 0    & 34.2  \\ 
		moon10low        & 3  & 3  & \cite{simulationlib}    & 1.5    & 45.6  \\
		morretal06       & 30 & 10 & \cite{simulationlib}    & $-330$    & 34.5  \\
		morris           & 20 & 20 & \cite{saltelli2000sensitivity}    & 20    & 30.1  \\
		oakoh04          & 15 & 15 & \cite{simulationlib}    & 10    & 24.9  \\ 
		otlcircuit       & 6  & 6  & \cite{simulationlib}    & 4.5    & 22.5  \\ 
		piston           & 7  & 7  & \cite{simulationlib}    & 0.4    & 36.8  \\ 
		soblev99         & 20 & 19 & \cite{simulationlib}    & 2000    & 41.3  \\ 
		sobol            & 8  & 8  & \cite{saltelli2000sensitivity}   & 0.7    & 39.2  \\ 
		welchetal92      & 20 & 18 & \cite{simulationlib}    & 0    & 35.6  \\ 
		willetal06       & 3  & 2  & \cite{simulationlib}    & $-1$    & 24.9  \\ 
		wingweight       & 10 & 10 & \cite{simulationlib}    & 250   & 37.8  \\ 
		TGL 			 & 9  & na  & \cite{Bryant2010}       & na    & 10.1  \\
		lake 			 & 5  & na  & \cite{DBLP:journals/envsoft/Kwakkel17} & na    & 33.5  \\
		\bottomrule
	\end{tabular}
\end{table}

We use 32 functions for our experiments,
most of which are commonly used in the metamodeling domain, one simulation model, the decentral smart grid control, and two datasets studied in scenario discovery research. 
We now describe these data sources. 

First, we have implemented functions 1--8 and 102 from~\cite{Dalal2013} following the descriptions in the paper. 
These are ``noisy''functions representing stochastic simulations.
The functions ``morris''and ``sobol''are implemented in the R package ``sensitivity''\footnote{\url{https://cran.r-project.org/web/packages/sensitivity/}}.
Next, we use the R implementations of the other functions from~\cite{simulationlib}.
We keep the original names of the functions as provided together with the implementation. 
We also introduce the function ``ellipse''$f_e(x)=\sum_{j=1}^{15}w_j\cdot(x_j-c_j)^2$
where $w_j\in[0,1]$, $c_j\in[0,1]$ are some constants, $w_j=0$ if $j>10$.
We binarized the output of real-valued functions by specifying the threshold $\textit{thr}$, so that $y=1$ if the output is below it and $y=0$ otherwise. 
This is common in scenario discovery \cite{Bryant2010}.

Our simulation model, ``dsgc''~\cite{Schafer2015}, is a novel approach facilitating demand response in electrical grids. 
We configured the model to have 12 inputs and one output which indicates the grid stability. For brevity we also refer to ``dsgc''as ``function''. 

To show that REDS can improve the results obtained from third-party data, we add datasets ``TGL''and ``lake''from publications on scenario discovery \cite{Bryant2010,DBLP:journals/envsoft/Kwakkel17}. 
''TGL''consists of 882 examples; from the ``lake''dataset we use the first 1000 examples.

Table~\ref{tab:app_dgps} lists all data sources used in our study. Here, $M$ is the number of inputs. $I\le M$ is the number of inputs affecting the output. 
The threshold values are in column ``thr''. The functions which already output $y\in\{0,1\}$ have the values ``na'' in this column. 
The expected share of outcomes $y=1$ with uniform distribution of points is in column ``share''. 

\subsection{Hyperparameters}
\label{subsection:hyerparameters}

We experiment with algorithms with both ``default''and optimized hyperparameters. 
In the following, we explain our design choices. Table~\ref{tab:hyperparameters} is a summary. The symbol ``$\ast$''in this table denotes any combination of characters excluding ``c'', e.g., ``xp''.

\subsubsection{PRIM}
There is no explicit agreement on the default value of the peeling parameter $\alpha$. In~\cite{Friedman1999,Kwakkel2016}, the range $[0.05,0.1]$ of values is recommended. 
Friedman and Fisher~\cite{Friedman1999} use $\alpha=0.1$ in their experiments, whereas Kwakkel and Jaxa{-}Rozen~\cite{Kwakkel2016a} experiment with $\alpha\in\{0.01,0.025,0.05,0.1\}$ but do not recommend a value in this set. 
Existing implementations of PRIM use either $\alpha=0.1$\footnote{\url{https://cran.r-project.org/web/packages/sdtoolkit}} or $\alpha=0.05$\footnote{\label{software:prim}\url{https://github.com/quaquel/EMAworkbench},\newline\url{https://cran.r-project.org/web/packages/prim/index.html}} as default. 
We set $\alpha=\alpha_d=0.05$ for all experiments except for those with ``TGL''dataset, where we use $\alpha=0.1$ in line with the previous research~\cite{Kwakkel2016a}. 
We also experiment with optimizing $\alpha$ for PRIM and PRIM with bumping. 
We do so by selecting the value from $\{0.03, 0.05, 0.07, 0.1, 0.13, 0.16, 0.2\}$ that performs best in the 5-fold cross-validation. 
Next, we set $mp=20$ so that $mp\cdot\alpha_d=1$. 

The hyperparameter values for PRIM with bumping were not stated in the respective paper \cite{Kwakkel2016}. 
$Q$ regulates the number of bootstrap repetitions. 
Intuitively, larger values of $Q$ are better but slow down the algorithm proportionally. 
We set $Q=50$. 
Hyperparameter $m$ restricts the number of inputs defining the box. 
We set its default value to the number of inputs in each dataset $m=M$, where $M$ is the number of inputs as before. 
To optimize hyperperameters of ``PBc'', we first select $\alpha$ from $\{0.03, 0.05, 0.07, 0.1, 0.13, 0.16, 0.2\}$ with conventional PRIM
and then select $m$ from the set $\{M-k\lceil M/6\rceil\}$, $k\in\{0\}\cup\mathbb{N}$, $k\lceil M/6\rceil<M$ via cross-validation.

\subsubsection{BI} Hyperparameters of the BI algorithm are the beam size $bs$ and $m$, the number of inputs defining a subgroup. 
A higher $bs$ value results in the evaluation of more candidate subgroups but increases the run time proportionally. 
We experiment with $bs\in\{1,5\}$. 
As with PRIM with bumping, we set the default value $m=M$ or select $m$ from the set $\{M-k\lceil M/6\rceil\}$.

\subsubsection{REDS}
For REDS, one needs to specify the number of newly generated points $L$, a metamodel $AM$, and a scenario-discovery method $\textsc{SD}$. 
For REDS with PRIM, as for just PRIM, we either set $\alpha=0.05$ or use the same value as for ``Pc''.
Similarly, for REDS with BI as $\textsc{SD}$, we optimize the value $m$ in the same way as for ``BIc'', using the dataset $D$, not $D^\textit{new}$.
As default, we set $L=10^5$ when $\textsc{SD}$ is PRIM and $L=10^4$ when $\textsc{SD}$ is BI, much larger than any $N$ used. We also try different $L$ values in a particular experiment.
For metamodels $AM$ we use the default hyperparameter-optimization procedure of the package ``caret''\footnote{\url{http://topepo.github.io/caret/index.html}}. 

\begin{table}[!t]
	\caption{Hyperparameters' values}
	\label{tab:hyperparameters}
	\centering
	\begin{tabular}{lccccccccc}
		\toprule
		& P & Pc & PB & PBc & RP$\ast$ & BI & BI5 & BIc & RBIc$\ast$ \\ \midrule
		$\alpha$ & 0.05 & cv & 0.05 & cv & 0.05 & {\color[HTML]{9B9B9B} $\times$} & {\color[HTML]{9B9B9B} $\times$} & {\color[HTML]{9B9B9B} $\times$} & {\color[HTML]{9B9B9B} $\times$} \\
		$mp$ & 20 & 20 & 20 & 20 & 20 & {\color[HTML]{9B9B9B} $\times$} & {\color[HTML]{9B9B9B} $\times$} & {\color[HTML]{9B9B9B} $\times$} & {\color[HTML]{9B9B9B} $\times$} \\
		$Q$ & {\color[HTML]{9B9B9B} $\times$} & {\color[HTML]{9B9B9B} $\times$} & 50 & 50 & {\color[HTML]{9B9B9B} $\times$} & {\color[HTML]{9B9B9B} $\times$} & {\color[HTML]{9B9B9B} $\times$} & {\color[HTML]{9B9B9B} $\times$} & {\color[HTML]{9B9B9B} $\times$} \\
		$m$ & {\color[HTML]{9B9B9B} $\times$} & {\color[HTML]{9B9B9B} $\times$} & $M$ & cv & {\color[HTML]{9B9B9B} $\times$} & $M$ & $M$ & cv & cv \\
		$bs$ & {\color[HTML]{9B9B9B} $\times$} & {\color[HTML]{9B9B9B} $\times$} & {\color[HTML]{9B9B9B} $\times$} & {\color[HTML]{9B9B9B} $\times$} & {\color[HTML]{9B9B9B} $\times$} & 1 & 5 & 1 & 1 \\
		$L$ & {\color[HTML]{9B9B9B} $\times$} & {\color[HTML]{9B9B9B} $\times$} & {\color[HTML]{9B9B9B} $\times$} & {\color[HTML]{9B9B9B} $\times$} & $10^5$ & {\color[HTML]{9B9B9B} $\times$} & {\color[HTML]{9B9B9B} $\times$} & {\color[HTML]{9B9B9B} $\times$} & $10^4$\\
		\bottomrule
	\end{tabular}
\end{table}

\subsection{Design of Experiments}
\label{subsection:doe}
In each experiment, we execute a subgroup discovery algorithm so that it returns a single box with BI or one sequence of nested boxes with PRIM. 
This is sufficient since (1) the quality of a set of scenarios is an aggregate of their individual qualities and (2) subsequent steps of the covering approach allowing to discover several scenarios (see Section~\ref{section:prim}) are conceptually equivalent.
This also is in line with existing research on scenario discovery, described in Section~\ref{subsection:scenario_discovery}.

For all functions, we experiment with data sets of sizes $N=\{200,\allowbreak400,\allowbreak800\}$. 
For ``morris'', $N=\{200,\allowbreak400,\allowbreak800,\allowbreak1600,\allowbreak3200\}$. 
To form the data sets $D$, we use the Halton sequence \cite{halton1964algorithm} sampling algorithm for ``dsgc''and Latin hypercube sampling from $[0,1]^M$ for all other functions. 
We use $D^{val}=D$.
For each function, we generate the test data $D^\textit{test}$ containing $20000$ points.
We run the experiment 50 times for each function and each value of~$N$. 
For ``TGL''and ``lake'', we run a 5-fold cross-validation independently 10 times.
To aggregate these results, we average the values of PR AUC and precision (for PRIM), WRAcc (for BI), and the numbers of restricted and irrelevantly restricted inputs. 
To estimate consistency, we compute $V_o/V_u$ (see Section~\ref{section:QM}) for each pair of (last) boxes from different runs and average the results. 
This is similar to the approach in~\cite{Domingos1997} to compute method stability, a measure akin to consistency.
We used a machine with 32 2GHz cores and 128GB of memory. 
We implemented methods in R and run the experiments in parallel, each one using a single core.

\section{Results}
\label{section:results}

\begin{figure}[t]
	\centering
	\subfloat
	{\includegraphics[width=2.7in]{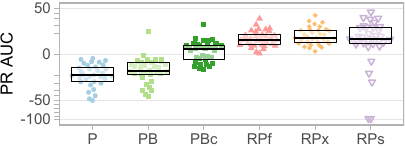}}
	\\
	\subfloat
	{\includegraphics[width=2.7in]{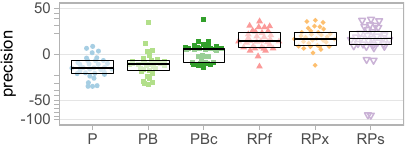}}
	\\
	\subfloat
	{\includegraphics[width=2.7in]{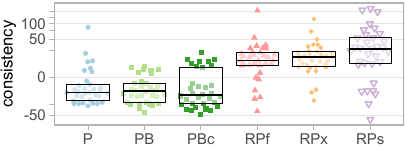}}
	\\
	\subfloat
	{\includegraphics[width=2.7in]{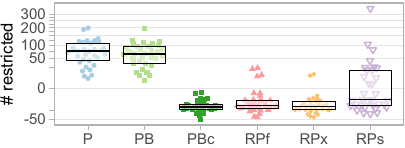}}
	\caption{Quality change in \% relative to ``Pc'', $N = 400$}
	\label{fig:prim_qual}
\end{figure}

We first compare the performance of methods across all functions. Then we further experiment with ``morris'' to study the influence of REDS hyperparameter $L$ and of data characteristics. Next, we experiment with ``TGL'' and ``lake'' to show how REDS improves scenarios discovered from third-party data when there is no simulation model available. 
Finally, we show that REDS also is an efficient semi-supervised subgroup-discovery method.

\subsection{Performance across All Functions}
\label{subsection:DGPs_results}

\begin{table}[t]
	\centering
	\caption{Quality of PRIM-based methods. All functions}
	\label{tab:prim_qual}
	\vspace*{-\baselineskip}
	\subfloat[Average PR AUC\label{tab:AUC}]{
		\begin{tabular}{lcccc|ccc} \toprule
			$N$ & \textbf{P} & \textbf{Pc} & \textbf{PB} & \textbf{PBc} & \textbf{RPf} & \textbf{RPx} & \textbf{RPs}\\ \midrule
			200 & 33.1 & 40 & 36.5 & 41.7 & \textit{45.1} & \textbf{45.5} & 40.7\\
			400 & 41.3 & 45.9 & 42.9 & 46.6 & \textit{48.6} & \textbf{49.4} & 44.8\\
			800 & 46.3 & 48.7 & 46.9 & 49 & \textit{50} & \textbf{50.8} & 45.9\\
			$\textrm{mor}_{800}$ & 14.5 & 23.5 & 15.3 & 23.1 & 27 & \textit{27.8} & \textbf{28.2}\\
			\bottomrule
		\end{tabular}
	}\\
	\subfloat[Average precision\label{tab:density}]{
		\begin{tabular}{lcccc|ccc} \toprule
			200 & 71.8 & 78.1 & 75 & 79.7 & \textit{85.6} & \textbf{86.4} & 82.2\\
			400 & 81.1 & 85.9 & 82.4 & 86.6 & \textit{91.3} & \textbf{92.6} & 88.3\\
			800 & 87.1 & 90.5 & 87.6 & 90.8 & \textit{93.8} & \textbf{95.3} & 90.2\\
			$\textrm{mor}_{800}$ & 58.8 & 73.8 & 57.7 & 70.5 & 86.3 & \textit{88.6} & \textbf{89.4}\\
			\bottomrule
		\end{tabular}
	}\\
	\subfloat[Average consistency\label{tab:consistency}]{
		\begin{tabular}{lcccc|ccc} \toprule
			200 & 40.5 & 45.2 & 40.2 & 42.6 & 50 & \textit{51.3} & \textbf{52.7}\\
			400 & 42.7 & 47.3 & 43.2 & 44.9 & 53.4 & \textit{53.6} & \textbf{56.8}\\
			800 & 45 & 49 & 45.1 & 46.4 & 55 & \textit{55.8} & \textbf{59.8}\\
			$\textrm{mor}_{800}$ & 14.8 & 17.2 & 11.5 & 10.7 & \textit{30.2} & 27.8 & \textbf{37.4}\\
			 \bottomrule			
		\end{tabular}
	}\\
	\subfloat[Average number of restricted inputs\label{tab:interpretability}]{
		\begin{tabular}{lcccc|ccc} \toprule
			200 & 7.79 & 4.27 & 7.3 & 3.34 & \textit{3.33} & \textbf{3.03} & 4.14\\
			400 & 7.75 & 4.32 & 7.46 & \textbf{3.54} & 3.72 & \textit{3.57} & 4.42\\
			800 & 7.48 & 4.38 & 7.35 & \textbf{3.75} & \textit{3.91} & 3.99 & 4.65\\
			$\textrm{mor}_{800}$ & 17.6 & 7.56 & 17.4 & \textbf{6.22} & \textit{6.88} & 7.7 & 7.44\\
			\bottomrule		
		\end{tabular}
	}\\
	\subfloat[Average number of irrelevantly restricted inputs\label{tab:interpretability_irr}]{
		\begin{tabular}{lcccc|ccc} \toprule
			200 & 2.83 & 0.38 & 2.47 & 0.12 & \textit{0.08} & \textbf{0.02} & 0.57\\
			400 & 2.77 & 0.33 & 2.51 & \textbf{0.07} & 0.11 & \textit{0.1} & 0.59\\
			800{\color{white}-----} & 2.43 & 0.37 & 2.28 & \textbf{0.09} & 0.13 & \textit{0.1} & 0.53\\
			\bottomrule
		\end{tabular}
	}	
\end{table}

We experiment with all 33 functions using continuous inputs and with 32 functions excluding ``dsgc'' using mixed (e.g., continuous and discrete) inputs.

\subsubsection{Continuous Inputs}
\label{section:continuous_inputs}
Table~\ref{tab:prim_qual} and Figure~\ref{fig:prim_qual} contain the results for PRIM-based methods across all 33 functions. 
The REDS variants ``RPxp'' and ``RPfp'' behaved similarly to ``RPx'' and ``RPf''.
The figures contain relative changes (in \%) of the measure values with respect to PRIM with optimized hyperparameters ``Pc''.
The vertical axes are scaled with the square root for better resolution.
Each point of the plot corresponds to an average value in $50$ experiments with one of 33 functions, for $N=400$. 
As before, the boxes show quartiles.
Higher values of PR AUC, precision, consistency, and fewer restricted inputs (higher interpretability) are better.
The first three rows in the tables are averaged metrics values for the 33 functions, for $N=\{200,400,800\}$. 
The rows ``$\textrm{mor}_{800}$'' are the results for the 20-dimensional ``morris'' function for $N=800$. 

For the PR AUC, precision, and consistency metrics, REDS beats the competitors. 
Its variant ``RPx'' leads to particularly good results.
According to pairwise post-hoc Friedman tests~\cite{DBLP:conf/gecco/OrzechowskiCM18}, ``RPx'' statistically outperforms conventional methods in terms of PR AUC and precision, with p-values $10^{-3}$ or less for $N=400$.
 
Regarding the number of restricted inputs and irrelevantly restricted inputs, the results with ``RPx'' and ``PBc'' are close.
On average, PR AUC and precision achieved for $N=800$ with ``Pc'' are lower than those obtained for $N=400$ with ``RPx''. 
Regarding consistency, results with REDS for $N=200$ already are better than any results of competitors even for $N=800$.
This means that our approach can reduce the number of simulation runs by 50--75\% for PR AUC, precision, and consistency. 
The Spearman correlation between the number of inputs $M$ and the relative PR AUC improvements of ``RPx'' over ``Pc'' for $N=400$ is 0.74. It shows that REDS is particularly powerful on high-dimensional functions which often require much time per simulation run.

\begin{table}[t]
	\centering
	\caption{Quality of BI-based methods. All functions}
	\label{tab:bi_qual}
	\vspace*{-\baselineskip}
	\subfloat[Average WRAcc\label{tab:wracc}]{
		\begin{tabular}{lccc|cc} \toprule
			\textbf{N} & \textbf{BI} & \textbf{BIc} & \textbf{BI5} & \textbf{RBIcfp} & \textbf{RBIcxp}\\ \midrule
			200 & 10.4 & 10.7 & 10.4 & \textit{11.2} & \textbf{11.3}\\
			400 & 10.9 & 11.2 & 10.9 & \textit{11.5} & \textbf{11.6}\\
			800 & 11.3 & 11.5 & 11.3 & \textit{11.6} & \textbf{11.8}\\
			$\textrm{mor}_{800}$ & 5.6 & 6.4 & 5.6 & \textit{6.7} & \textbf{7.2}\\
			\bottomrule
		\end{tabular}
	}\\
	\subfloat[Average consistency\label{tab:bi_consistency}]{
		\begin{tabular}{lccc|cc} \toprule
			200 & 58.5 & 60.7 & 58.5 & {\color{white}---}\textit{67.2}{\color{white}---} & {\color{white}---}\textbf{68.4}{\color{white}---}\\
			400 & 64.5 & 66.8 & 64.9 & \textit{72.2} & \textbf{73.3}\\
			800 & 70.3 & 71.9 & 70.4 & \textit{75.7} & \textbf{77.5}\\
			$\textrm{mor}_{800}$ & 43.1 & 47.9 & 43.4 & \textit{51} & \textbf{63}\\
			\bottomrule
		\end{tabular}
	}\\
	\subfloat[Average number of restricted inputs\label{tab:bi_restricted}]{
	\begin{tabular}{lccc|cc} \toprule
		200 & 5.07 & \textit{3.17} & 5.27 & {\color{white}---}\textbf{3.15}{\color{white}---} & {\color{white}---}3.35{\color{white}---}\\
		400 & 6 & \textit{3.28} & 6.04 & \textbf{3.2} & 3.32\\
		800 & 6.57 & \textit{3.26} & 6.61 & \textbf{3.19} & \textit{3.26}\\
		$\textrm{mor}_{800}$ & 15.1 & 5.08 & 15.1 & \textbf{4.78} & \textit{5}\\ 
		\bottomrule
	\end{tabular}
	}\\
	\subfloat[Average number of irrelevantly restricted inputs\label{tab:bi_irrel}]{
	\begin{tabular}{lccc|cc} \toprule
		200 & 0.79 & \textbf{0.05} & 0.9 & {\color{white}---}0.09{\color{white}---} & {\color{white}---}\textit{0.08}{\color{white}---}\\
		400 & 1.4 & \textit{0.04} & 1.4 & \textbf{0.03} & \textbf{0.03}\\
		800{\color{white}-----} & 1.74 & \textit{0.05} & 1.79 & \textbf{0.04} & \textit{0.05}\\ 
		\bottomrule
	\end{tabular}
	}
\end{table}

\begin{figure}[t]
	\centering
	\subfloat
	{\includegraphics[width=0.95in]{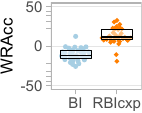}}
	\quad
	\subfloat
	{\includegraphics[width=0.95in]{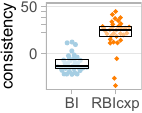}}
	\quad
	\subfloat
	{\includegraphics[width=1.05in]{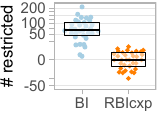}}
	\caption{Quality change in \% relative to ``BIc'', $N = 400$}
	\label{fig:bi_qual}
\end{figure}

Table~\ref{tab:bi_qual} and Figure~\ref{fig:bi_qual} feature the results for BI-based methods. 
As before, vertical axes on the plots are square root-scaled.
Points represent relative quality changes with respect to ``BIc''. 
Similarly to the PRIM-based methods, hyperparameter optimization improves the results. WRAcc, consistency, and interpretability of ``BIc'' (zero ordinate) are better than their counterparts obtained with ``BI''. 
REDS statistically outperforms the baselines. The p-value of the pairwise post-hoc Friedman test between ``RBIcxp'' and ``BIc'' for $N=400$ is $10^{-3}$. 
The quality gain of REDS increases with the number of inputs. The Spearman correlation between $M$ and the relative WRAcc improvements of ``RBIcxp'' over ``BIc'' for $N=400$ is 0.77.
On average, REDS needs more than two times fewer simulations to achieve WRAcc or consistency similar to ``BIc'', while its interpretability is comparable to the one of ``BIc''.

\begin{figure}[t]
	\centering
	\subfloat
	{\includegraphics[width=1.75in]{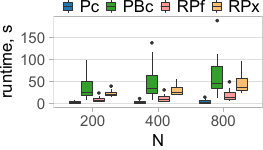}}
	\quad
	\subfloat
	{\includegraphics[width=1.45in]{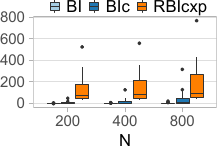}}
	\caption{Runtimes}
	\label{fig:runtimes}
\end{figure}

Figure~\ref{fig:runtimes} shows runtimes contingent on $N$. As before, the plots are based on the results on 33 functions, each averaged across 50 experiments. In our case, many methods scale sublinearly with dataset size. For REDS, this likely means that, for small $N$, the $L$-dependent terms (see Section~\ref{section:complexity}) dominate the complexity. For baselines, the sub-linear behavior implies that the cost of sorting is negligible for small $N$. 
In all cases, the runtime is less than 800 seconds, often much less. Given that REDS requires 2--4 times fewer simulations than competitors on average, this means that REDS is already faster when a single simulation lasts longer than two seconds at $N= 400$. In many settings however, a simulation takes hours to days~\cite{yilmaz2019reducing,Wang2007}.

\subsubsection{Mixed Inputs}
\label{subsubsection:mixed_inputs}

\begin{figure}[t]
	\centering
	\subfloat
	{\includegraphics[width=1in]{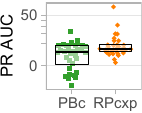}}
	\quad
	\subfloat
	{\includegraphics[width=1in]{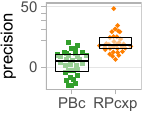}}
	\quad
	\subfloat
	{\includegraphics[width=1in]{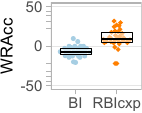}}
	\caption{Quality change in \% relative to ``Pc''/''BIc'', $N = 400$}
	\label{fig:discr}
\end{figure}

So far, all inputs of our functions were continuous; this is a common case in scenario discovery. We now show that REDS finds better scenarios in the case of mixed inputs. To this end, we sample the values of \emph{even} inputs i.i.d.\ from the set $\{0.1,\allowbreak0.3,\allowbreak0.5,\allowbreak0.7,\allowbreak0.9\}$. 
The REDS modification ``RPcxp'' performed best among PRIM-based methods. ``RBIcxp'' was better than its BI-based competitors. Figure~\ref{fig:discr} shows a relative quality gain of these methods with respect to ``Pc'' or ``BIc'' for $N=400$. The p-values of the pairwise post-hoc Friedman test between REDS and competitors for the metrics presented are $0.017$ or less, i.e., the results of REDS are significantly better.

\subsection{Experiments with ``morris''}
\label{subsection:DSGC_results}
We now carry out further experiments with ``morris'' to study the influence of randomness in the data set $D$, of its size $N$, and of the values of the hyperparameter $L$ of REDS.

\subsubsection{Peeling Trajectories and Variance of the Results}

\begin{figure}[t]
	\centering
	\subfloat
	{\includegraphics[width=1.5in]{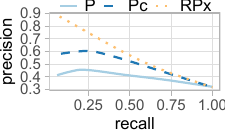}}
	\qquad
	\subfloat
	{\includegraphics[width=1.5in]{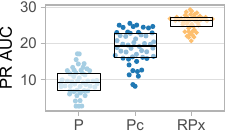}}
	\caption{Peeling trajectories \& PR AUC, ``morris'', $N = 400$}
	\label{fig:p_b}
\end{figure}

Figure~\ref{fig:p_b} plots the peeling trajectories for different methods and $N = 400$ smoothed across 50 repetitions and PR AUC values in these 50 experiments. 
On the first plot, the curve produced with ``RPx'' dominates the ones obtained with competitors, ``P'' or ``Pc''. 
That is, both precision and recall are higher.
``RPx'' yields a \emph{significant} improvement in PR AUC over ``Pc'' as the right plot suggests; the p-value $<10^{-15}$ of the Wilcoxon-Mann-Whitney test~\cite{hollander1973nonparametric} confirms this. Comparing ``RBIcxp'' to ``BIc'' results in similar findings.

\subsubsection{Dependence on $N$ and $L$}
\label{section:K-values}

\begin{figure}[t]
	\centering
	\subfloat
	{\includegraphics[width=3.2in]{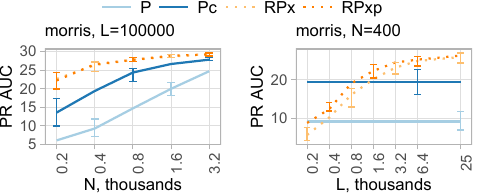}}
	\\
	\subfloat
	{\includegraphics[width=3.2in]{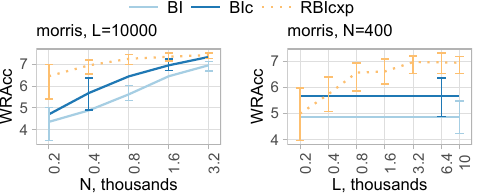}}
	\caption{Influence of $N$/$L$, function ``morris'', $N=400$}
	\label{fig:lc_ngen}
\end{figure}

The horizontal axes in Figure~\ref{fig:lc_ngen} are in logarithmic scale.
The lines are median values of 50 experiments for each value of $N$ or $L$; 
error bars show the interquartile ranges. 
The plots on the left side show the influence of the number of simulations $N$. The quality of scenarios grows with $N$; for large $N$ it approaches a data dependent limit, forming so-called learning curves. 
The learning curves of REDS dominate the ones of competing methods, as expected, cf.\  Figure~\ref{fig:generalization_speed}. 
The plots on the right side show the influence of the number of newly generated examples $L$. 
Observe that REDS uses only $L$ newly generated and labeled points, ignoring the initial data. 
``RPxp'' outperforms ``P'' with the same $\alpha=0.05$ by much even when $L=N=400$. 
This means that labeling points with $f^{am}(x)$ lets PRIM find a better scenario than labels $\{0,1\}$ from the original simulation model. 
This result confirms our statistical analysis.

\subsection{Scenario Discovery from Third-Party Data}
\label{subsection:legacy}

\begin{figure}[t]
	\centering
	\includegraphics[width=3in]{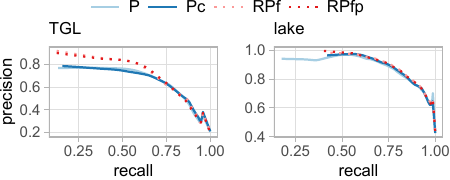}
	\caption{Peeling trajectories for ``TGL'' and ``lake'' datasets}
	\label{fig:peeling_real}
\end{figure}

\begin{table}[t]
	\centering
	\caption{Performance on ``TGL'' and ``lake'' datasets}
	\label{tab:reds_on_real}
	\begin{tabular}{lccc|ccc} \toprule
		 & \multicolumn{3}{c}{TGL} & \multicolumn{3}{c}{lake} \\ 
		& \textbf{Pc} & \textbf{RPf} & \textbf{RPfp} & \textbf{Pc} & \textbf{RPf} & \textbf{RPfp} \\ \midrule
		PR AUC & 61.2 & \textit{65.8} & \textbf{65.9} & 58.1 & \textit{58.9} & \textbf{59.4} \\
		precision & 84.4 & \textit{94.4} & \textbf{94.8} & 97.4 & \textit{98.2} & \textbf{99.1} \\
		consistency & 41.3 & \textit{60.6} & \textbf{67.2} & 74.9 & \textit{86.7} & \textbf{94.5} \\
		\# restricted & 4.94 & \textbf{3.76} & \textit{3.84} & \textit{2.94} & \textbf{2.92} & 3 \\
		\bottomrule
	\end{tabular}
\end{table}

We demonstrate how REDS improves scenario discovery from third-party datasets using ``TGL'' and ``lake''.
Figure~\ref{fig:peeling_real} shows the peeling trajectories smoothed across 50 experiments. 
REDS improves the results of PRIM in high-precision areas. 
Table~\ref{tab:reds_on_real} lists averaged metrics value. 
For both ``TGL'' and ``lake'', REDS finds much more stable scenarios than ``Pc'', as high consistency values suggest. 
For ``TGL'', REDS also yields much better values for the other metrics. 
``RPx'' and ``RPxp'' (not presented here) behave somewhat worse than ``RPf'' and ``RFfp'', but better than ``Pc''. 

\subsection{REDS as a Semi-supervised Method}
\label{section:reds_ssl} 

\begin{figure}[t]
	\centering
	\subfloat
	{\includegraphics[width=1in]{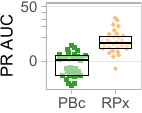}}
	\quad
	\subfloat
	{\includegraphics[width=1in]{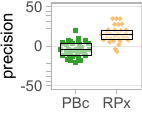}}
	\quad
	\subfloat
	{\includegraphics[width=1in]{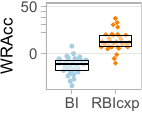}}
	\caption{Quality change in \% relative to ``Pc''/''BIc'', $N = 400$} 
	\label{fig:ssl}
\end{figure}

Learning methods using both labeled and unlabeled data for training are semi-supervised~\cite{3603,zhu2005semi}. 
As explained in Section~\ref{section:proposed}, REDS is suitable for the semi-supervised scenario discovery when both kinds of data follow the same distribution $p(x)$. 
Conceptually, the only difference of this setting to scenario discovery is that $p(x)$ is not confined to be uniform. 
To test REDS as a semi-supervised method, we sample all inputs of the functions independently from a logit-normal distribution with $\mu=0$ and $\sigma=1$. 
We used the same values for $thr$ as in Table~\ref{tab:app_dgps} and kept 30 functions for which the share of ``interesting'' ($y=1$) examples remained greater than 5\%. 
The results in Figure~\ref{fig:ssl} are similar to the ones from Section~\ref{section:continuous_inputs}~--- REDS is better than the competitors in a semi-supervised setting.

\section{Future Research}
\label{section:future}
Despite the example supporting PRIM from Section~\ref{section:discussion}, there are up to now no experimental comparison of subgroup-discovery algorithms regarding their suitability for scenario discovery.
In future work, we plan to perform a user study with domain experts to assess the practical usability of various subgroup-discovery algorithms and benefits of REDS for scenario discovery.

Next, we have proposed REDS to decrease the number of simulations needed for learning scenarios, i.e., to minimize the labeling effort. 
There exist techniques with a similar target, known under the names
active learning~\cite{Settles2010}, adaptive sampling~\cite{Gorissen2010} or selective sampling~\cite{badarna2019selective}.
Active learning (AL) allows an ML algorithm to iteratively select the most ``informative'' instances to be labeled next. 
For box-producing models, several AL methods have been proposed for decision trees~\cite{DBLP:journals/tkde/DimitriadouPD16,DBLP:conf/icml/LewisC94,DBLP:conf/ecml/DwyerH07}.
For subgroup discovery, we are not aware of any such method. Hence, we see the development of such an AL technique as a direction for future research. 

On the other hand, REDS can already benefit from existing active learning research. 
Namely, several AL methods exist for many machine learning models such as random forest~\cite{badarna2019selective}, support vector machine~\cite{tong2001support}, XGBoost~\cite{DBLP:journals/access/XueW20b} and others~\cite{Settles2010}, which one may use as an intermediate metamodel $AM$ in REDS. 
Combining REDS with active learning is another future research direction.

The success of REDS might depend on the complexity of the boundary in the input space that separates examples with $y=1$ and $y=0$. 
So far we have used dimensionality as a proxy for complexity. 
In the future, we aim to propose better complexity measures and study the influence of complexity on REDS performance.

\section{Conclusions}
\label{section:conclusion}
Simulations allow studying the behavior of complex systems. 
Scenario discovery, the topic of this paper, is the process of using simulations to gain insights regarding this behavior. 
Subgroup-discovery methods, PRIM in particular, are in use for scenario discovery. 
These methods isolate conditions for system behavior of interest, referred to as ``scenarios''.
As we have shown, the disadvantage of these algorithms
is their need for relatively many simulation runs. 
In this paper, we have proposed an improvement of scenario discovery. 
Based on data simulations have generated, our method, REDS, first trains a statistical model. 
This model then replaces the simulation model to label much more data for scenario discovery. 
We have shown that one can expect our approach to be superior with statistical arguments and with exhaustive experiments. 
Specifically, REDS requires 50--75\% fewer points on average than conventional methods, to
yield scenarios of comparable quality.
In addition, we demonstrated that REDS can improve scenario discovery from third-party data and can be used for semi-supervised subgroup discovery.

\begin{acks}
	This work was supported by the \grantsponsor{}{German Research Foundation (Deutsche Forschungsgemeinschaft)}{}, Research Training Group \grantnum{}{2153}: ``Energy Status Data~--- Informatics Methods for its Collection, Analysis and Exploitation''. 
	We thank the anonymous referees for their valuable comments and helpful suggestions. We also thank Georg Steinbu{\ss}, Edouard Fouch\'e, Pavel Obraztcov and Tien Bach Nguyen for fruitful discussions and other support.
\end{acks}

\balance
\bibliographystyle{ACM-Reference-Format}
\bibliography{References}

\end{document}